\begin{document}

\title{eFedLLM: Efficient LLM Inference Based on Federated Learning}

\author{Shengwen Ding}
\affiliation{%
  \institution{Independent Researcher}
  \city{Pittsburgh, PA}
  \country{USA}}
\email{jenny.dingsw@gmail.com}

\author{Chenhui Hu}
\affiliation{%
  \institution{Independent Researcher}
  \city{Cambridge, MA}
  \country{USA}}

\renewcommand{\shortauthors}{Ding et al.}

\begin{abstract}
  Large Language Models (LLMs) herald a transformative era in artificial intelligence (AI). However, the expansive scale of data and parameters of LLMs requires high-demand computational and memory resources, restricting their accessibility to a broader range of users and researchers. This paper introduces an effective approach that enhances the operational efficiency and affordability of LLM inference. By utilizing transformer-based federated learning (FL) with model-parallel distributed training, our model efficiently distributes the computational loads and memory requirements across a network of participants. This strategy permits users, especially those with limited resources to train state-of-the-art LLMs collaboratively. We also innovate an incentive mechanism within the FL framework, rewarding constructive contributions and filtering out malicious activities, thereby safeguarding the integrity and reliability of the training process. Concurrently, we leverage memory hierarchy strategies and Singular Value Decomposition (SVD) on weight matrices to boost computational and memory efficiencies further. Our results, derived from formulaic analyses and numerical calculations, demonstrate significant optimization of resource use and democratize access to cutting-edge LLMs, ensuring that a wide scale of users can both contribute to and benefit from these advanced models.   
  
\end{abstract}

\begin{CCSXML}
<ccs2012>
   <concept>
       <concept_id>10010520.10010521.10010542.10010294</concept_id>
       <concept_desc>Computer systems organization~Neural networks</concept_desc>
       <concept_significance>500</concept_significance>
       </concept>
   <concept>
       <concept_id>10010520.10010521.10010542.10010545</concept_id>
       <concept_desc>Computer systems organization~Data flow architectures</concept_desc>
       <concept_significance>500</concept_significance>
       </concept>
   <concept>
       <concept_id>10010147.10010169.10010170.10010171</concept_id>
       <concept_desc>Computing methodologies~Shared memory algorithms</concept_desc>
       <concept_significance>500</concept_significance>
       </concept>
   <concept>
       <concept_id>10010147.10010178.10010179</concept_id>
       <concept_desc>Computing methodologies~Natural language processing</concept_desc>
       <concept_significance>500</concept_significance>
       </concept>
   <concept>
       <concept_id>10010147.10010178.10010219.10010220</concept_id>
       <concept_desc>Computing methodologies~Multi-agent systems</concept_desc>
       <concept_significance>500</concept_significance>
       </concept>
 </ccs2012>
\end{CCSXML}

\ccsdesc[500]{Computer systems organization~Neural networks}
\ccsdesc[500]{Computer systems organization~Data flow architectures}
\ccsdesc[500]{Computing methodologies~Shared memory algorithms}
\ccsdesc[500]{Computing methodologies~Natural language processing}
\ccsdesc[500]{Computing methodologies~Multi-agent systems}

\keywords{Large Language Model, Federated Learning, Transformer Model, Distributed Inference, Incentive Mechanism, Memory Hierarchy, Singular Value Decomposition}


\maketitle

\section{Introduction}
Large language models (LLMs) represent a significant advancement in artificial intelligence (AI), specially designed to process and produce natural language text. The substantial improvement of LLMs over traditional machine learning (ML) models can be attributed to their large-scale training datasets collected from multiple sources (e.g. books, articles, web pages, and images), expansive scale of parameters, and enhanced precision in floating-point computations. These models typically utilize Transformer architectures \cite{vaswani2017attention}, which are pivotal for their ability to recognize patterns and rules of language, allowing them to predict and generate appropriate responses. One unique attribute of the Transformer is the Attention mechanism \cite{vaswani2017attention}, which dynamically weighs the importance of different tokens (i.e. the text inputs) in a sequence, thereby improving the model’s ability to focus on relevant parts of the input when predicting the most probable output. LLMs power a variety of transformative generative AI applications, such as ChatGPT \cite{brown2020language}, Phi-3 \cite{abdin2024phi}, StyleGAN \cite{karras2020analyzing}, and BERT \cite{devlin2018bert}, dramatically altering many aspects of human experience. 

Despite their advanced capabilities, LLMs are still inaccessible to many users and researchers who do not have enough computational resources, leading to challenges in model training and inference \cite{conway2024opML, sun2024zkLLM, cai2024medusa}. With the rapid development of LLM architectures, the models have grown significantly in size and complexity. For example, BERT-Large, introduced in 2018, comprises 345 million parameters \cite{LambdaLabs2020}, requiring substantial computational resources for training and inference. By 2020, GPT-3 expanded this scale to 175 billion parameters \cite{brown2020language, LambdaLabs2020}, demanding an estimated 700 GB of memory just to store these parameters, making it impractical for single GPU systems. GPT-4 is even estimated around 1.4 trillion parameters, further amplifying these challenges. The GPU memory requirements associated with such models make them inaccessible to users who cannot afford the high-end hardware necessary for such computations,
which might restrict the democratization of AI technology. 

Addressing the high-demand computational and memory requirements of LLMs, this paper investigates effective strategies to enhance operational efficiency and accessibility for a broad range of users and researchers. In this work, cost-effective methods are explored for running pre-trained LLMs primarily during inference, their most common application. The aim is to enhance the operational efficiency and affordability of LLM inference by integrating them with transformer-based federated learning (FL) \cite{li2020review, zhang2021survey} and targeted algorithmic optimizations. Specifically, the contributions of the paper are threefold: 
\begin{itemize}
    \item \textbf{Distributed Training Architecture}: FL is utilized to distribute the training of different layers of the Transformer models across multiple decentralized devices or servers. This method not only optimizes the training process for greater efficiency but also enhances security. Within this FL framework, users equipped with varying levels of computational resources collaboratively train LLMs in a sequence, with the output from one server feeding directly into the next, thereby enabling efficient use of distributed resources.
    \item \textbf{Incentive Mechanism}: An incentive mechanism is introduced within the FL framework that rewards constructive contributions and deters malicious activities. This innovation preserves the integrity and reliability of the distributed training process, fostering a secure training environment. 
    \item \textbf{Theoretical Optimization}: Theoretical methods such as memory hierarchy strategies and Singular Value Decomposition (SVD) on weight matrices are implemented. These enhancements significantly improve computational and memory efficiencies during the inference phase, reducing operational costs and enabling more widespread use of advanced LLMs. Base on the analysis of one layer of the BERT model, our results reveal that the bandwidth reduce rate decreases as the compression ratio of the weight matrices increases. When compressing the model's weight matrices to 70\% of their original size,  the bandwidth usage can be reduced by about 60\%. 
\end{itemize}

The rest of this paper is organized as such: Section~\ref{sec:Background} provides a background of FL and Transformer models. Section~\ref{sec:Architecture} introduces the transformer-based FL framework and an incentive mechanism. Section~\ref{sec:AlgorithmOptimizations} presents the algorithm optimizations applied in the framework and their analysis results. Section~\ref{sec:RelatedWorks} reviews works related to the current study. Finally, the paper concludes in Section~\ref{sec:Conclusions}.

\section{Background}
\label{sec:Background}

There are various methods for optimizing training and inference in machine learning (ML) models. In this paper, we mainly focus on two areas: model-parallel FL and algorithmic optimizations based on the transformer model architecture. In this section, we present a brief introduction on FL and the transformer model.

\subsection{Federated Learning}
FL \cite{li2020review, zhang2021survey} has emerged as a crucial strategy to optimize training and inference for ML models, especially when dealing with data privacy concerns and distributed computational resources \cite{li2021survey, truex2019hybrid, wang2019adaptive, abdelmoniem2023refl}. FL can be primarily categorized into two types as shown in Figure~\ref{fig:two_type_FL}: data-parallel and model-parallel FL. 

\begin{figure}[ht]
  \centering
    \begin{subfigure}[b]{0.48\textwidth}
        \centering
        \includegraphics[width=\textwidth]{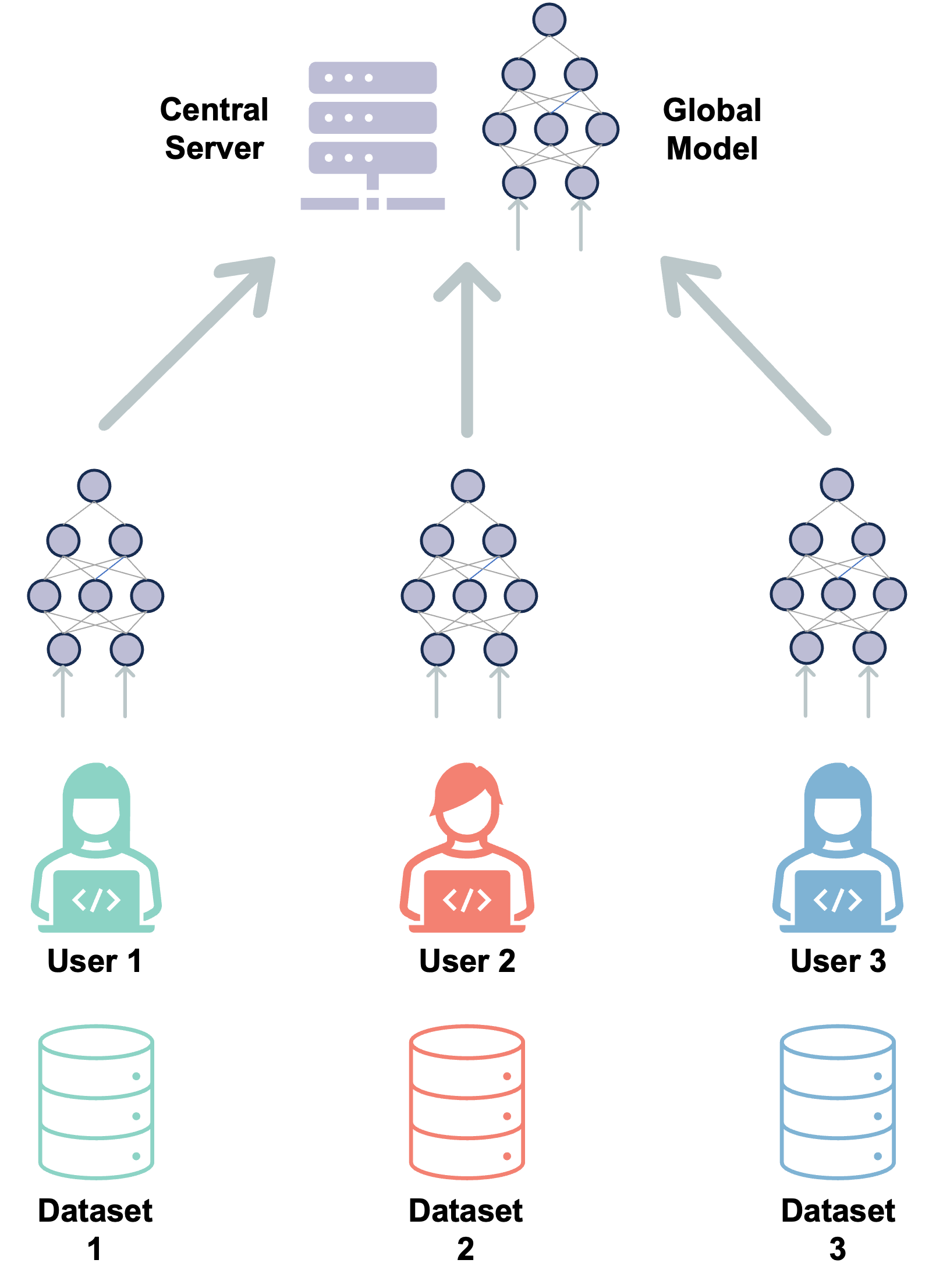}
        \caption{Data-Parallel FL}
    \end{subfigure}
    \hfill
    \begin{subfigure}[b]{0.34\textwidth}
        \centering
        \includegraphics[width=\textwidth]{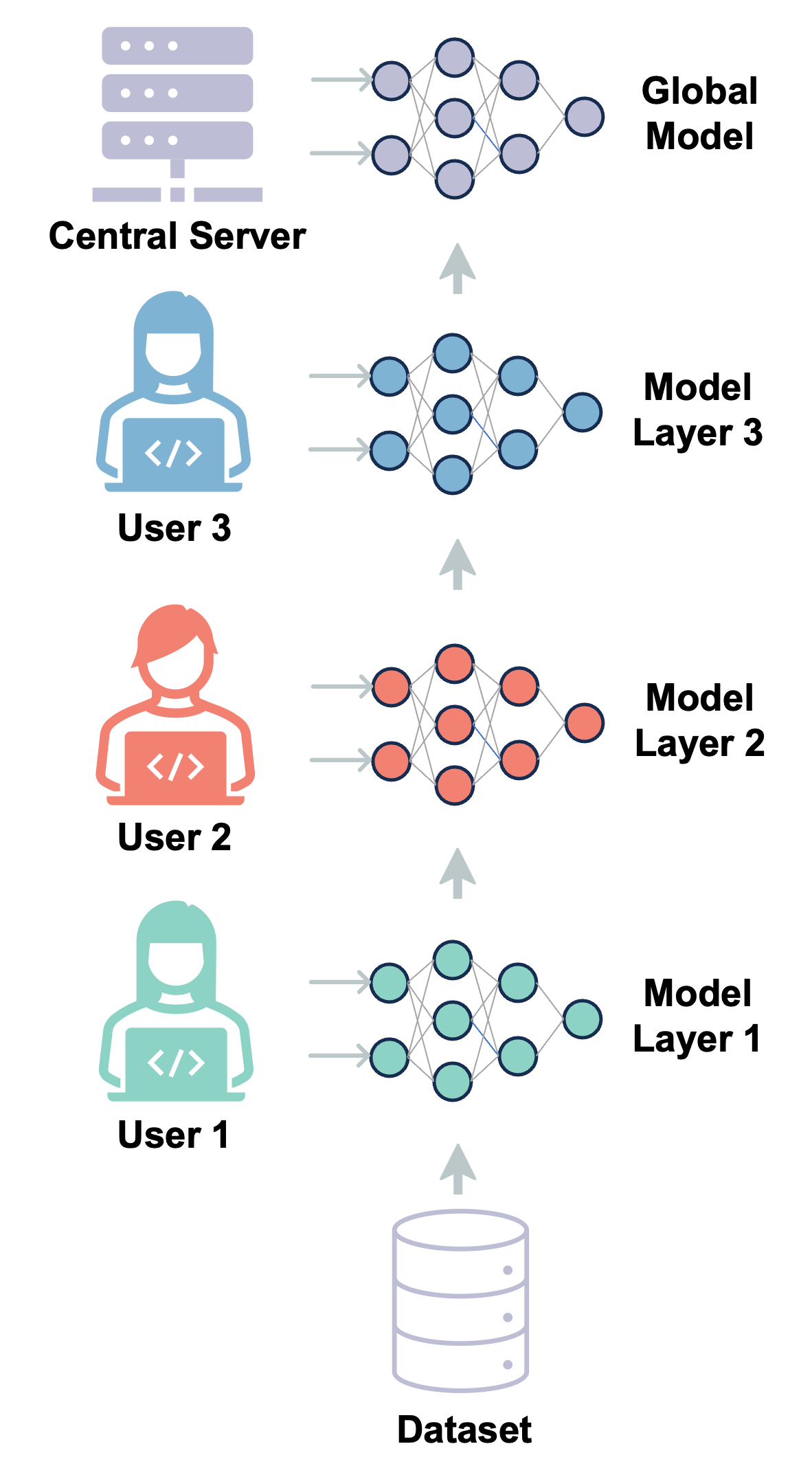}
        \caption{Model-Parallel FL}
    \end{subfigure}  
  \caption{Two Types of Federated Learning Models}
  \label{fig:two_type_FL}
  \Description{}
\end{figure}

Data-parallel FL involves distributing different subsets of training dataset across multiple devices or servers. Each participant trains a complete copy of the model independently on their subset of the data, and the individual updates are then aggregated to update the global model as seen in Figure~\ref{fig:two_type_FL} (a). This approach, while straightforward, places high demands on computational resources, as each device must have the capability to train a complete model. In contrast, model-parallel FL, which is employed in this work, distributes different layers or sections of a model across multiple devices or servers. This approach allows each participant to train only a portion of the model, making it particularly advantageous for large models like LLMs that have extensive parameters. Here, the output of one layer processed by a server becomes the input for the next layer processed by another server in the sequence. The advantages of using these chain-processing FL include enhanced data privacy, as data does not need to leave its original location, and increased utilization of distributed computational resources as presented in Figure~\ref{fig:two_type_FL} (b). 

However, FL also introduces potential security issues. For instance, a participant might upload incorrect or maliciously modified training results, a form of attack known as model poisoning attack \cite{fang2020local, cao2019understanding}. Such an attack can degrade the model's performance or cause it to behave in unintended ways. Furthermore, the security vulnerabilities specific to LLMs, including those trained using FL approaches, are significant and multifaceted. As detailed in \cite{liu2024exploring}, LLMs can be susceptible to various forms of prompt hacking and adversarial attacks, which can subtly manipulate model behavior or inject malicious content. The CFSafety benchmark introduced in \cite{liu2024cfsafety} provides a  framework for assessing the safety and security of LLMs by testing against a wide array of safety scenarios and instruction-based attacks. To address these security concerns, an incentive mechanism is included within the transformer-based FL framework in this paper in  Section~\ref{sec:Architecture}. This mechanism rewards constructive contributions while filtering out malicious activities. 

\subsection{Transformer Model}
Transformers represent a significant architecture in LLMs. Central to the transformer models are two distinct components: the encoder and the decoder, as depicted in Figure~\ref{fig:Transformer}. 

\begin{figure}[ht]
  \centering
  \includegraphics[width=0.75\textwidth]{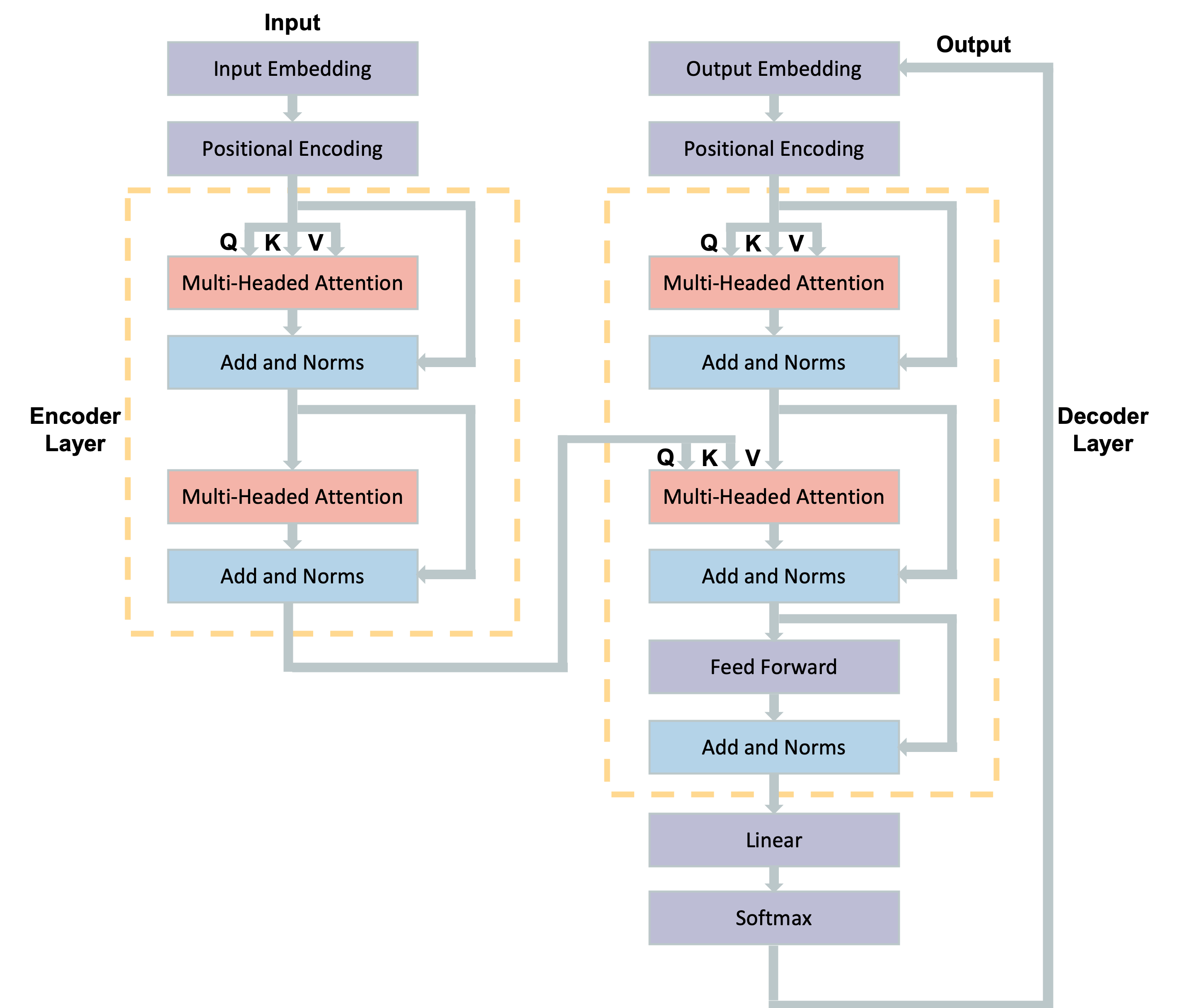}
  \caption{The Transformer Model in LLM}
  \label{fig:Transformer}
  \Description{}
\end{figure}

The encoder processes the input text data by first converting it into embeddings that integrate positional encoding to retain the sequence order. The formula for positional encoding can be expressed as:

\begin{equation}
    PE_{(pos, 2i)} = \sin\left(\frac{pos}{10000^{\frac{2i}{d}}}\right)
\end{equation}

\begin{equation}
    PE_{(pos, 2i+1)} = \cos\left(\frac{pos}{10000^{\frac{2i}{d}}}\right)
\end{equation}

where \(pos\) is the position, \(i\) is the dimension index, and \(d\) is the embedding dimension.

This is followed by a series of operations that enhance the model's ability to focus on different parts of the input data sequence for better context understanding. Multi-Head attention mechanism allows the model to weigh and relate various tokens in the input sequence differently, aggregating information from different representation at different positions. Each head in the multi-head attention mechanism utilizes its own weight matrices for transforming query, key, and value vectors. As the number of heads and the dimension of these vectors increase, the cumulative size of these matrices can become substantial. Optimizing these weight matrices, which is discussed in Section~\ref{sec:AlgorithmOptimizations}, can help reduce the memory footprint and computational overhead. Each attention output is then normalized and added back to the original input through a residual connection, helping to stabilize the learning process. Then, each position is processed independently in the same way through a feed-forward neural network, further transforming the representations. The output from the encoder is a continuous vector representations of the input, enriched with attention-driven contextual information.

The decoder is designed to generate output tokens by transforming the encoded information. Similar to the input embedding, the decoder side processes the input data by output embedding to retain the sequence order. It is also followed by multi-head attention, add \& norm, and a feed-forward network. The final transformation of decoder outputs into a probability distribution over possible output tokens. This also involves a weight matrix, with the $Softmax$ function providing the probabilities necessary for prediction.

\section{Architecture}
\label{sec:Architecture}

The primary applications of pre-trained LLMs are mainly two-fold: inference \cite{schick2021generating} and fine-tuning \cite{hu2021lora, houlsby2019parameter}. Due to its widespread use in real-time applications, the focus here is primarily on the inference aspect of LLMs. Inference involves encoding the input tokens and predicting the next possible tokens. This is explored through an effective transformer-based FL framework, as shown in Figure \ref{fig:FL_framework}, which is designed to optimize the efficiency of inference processes. To solve the aforementioned issues in existing LLM inference, this section will detail the architecture and the operational workflow of each type of users in our framework.

\begin{figure}[b]
  \centering
  \includegraphics[width=0.70\textwidth]{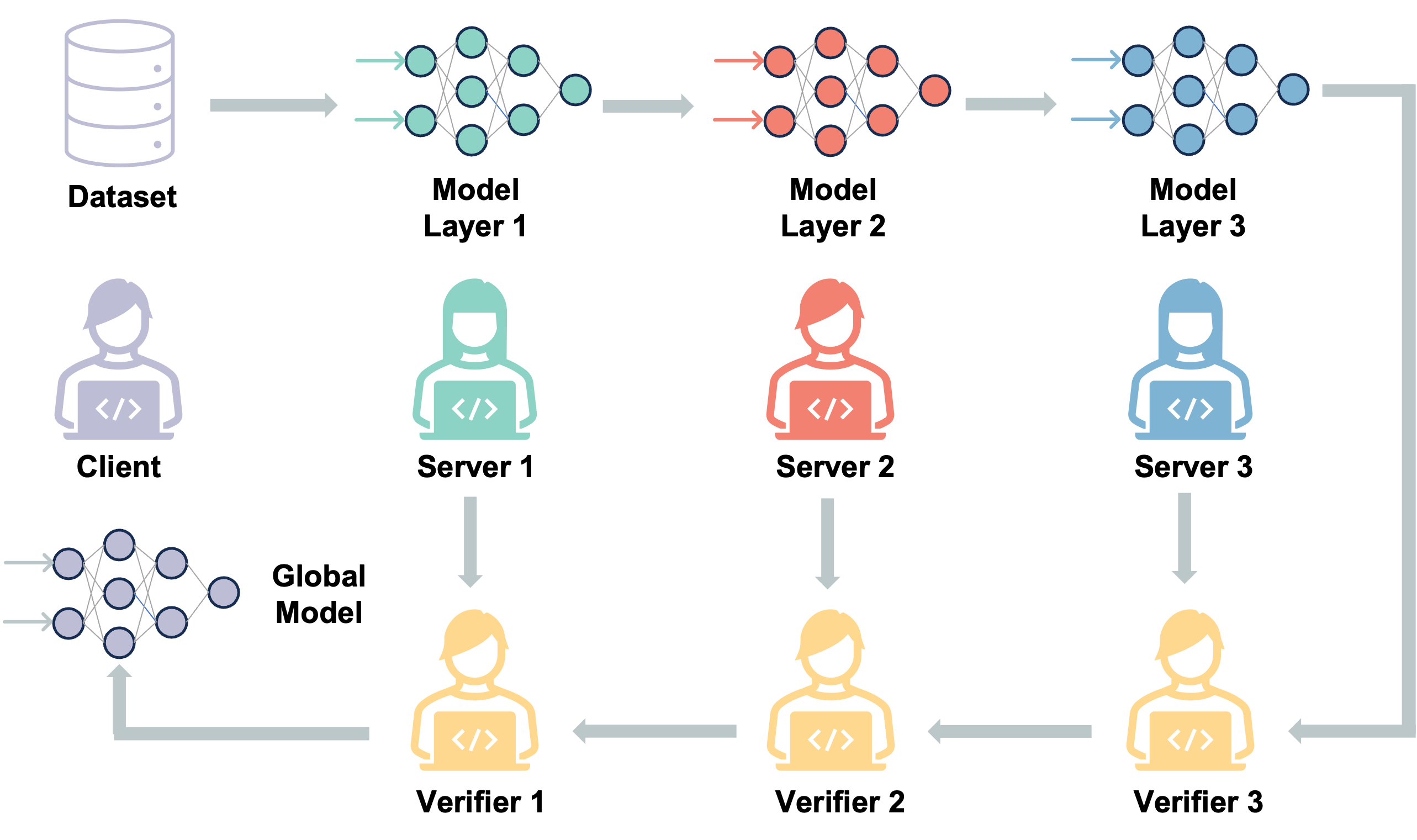}
  \caption{eFedLLM Framework Overview}
  \label{fig:FL_framework}
  \Description{}
\end{figure}

\subsection{Transformer-Based Model-Parallel Federated Learning}

Integral to {\itshape eFedLLM} is the transformer architecture, where each layer consists a complete transformer block, featuring components such as self-attention, a feed-forward network, normalization layers, and residual connection \cite{vaswani2017attention, lyu2019advances}. For an LLM with $L$ layers of transformer block, predicting a sequence of $M$ tokens involves passing each token through all these $L$ layers,  resulting in a time complexity of $O(L \times M)$ during inference \cite{kaplan2020scaling, lyu2019advances}. Furthermore, existing inference techniques often require storing past keys and values in accelerator (global) memory, which is resource-intensive. For instance, maintaining half-precision activation of a 2048-token sequence for LLMs like GPT-3 can consume approximately 9.6 GB of GPU memory per sequence\cite{brown2020language}. To address these challenges, {\itshape eFedLLM} employs transformer-based model parallel FL, aiming to reduce the memory footprint and enhance the feasibility of using advanced LLMs across multiple computing environments.

Building on this, the computational architecture of our framework is designed to distribute the inference workload across an interconnected network of GPUs or servers. This configurations allows different stakeholders in the FL setup—namely $Clients$, $Servers$, and $Verifiers$—to interact efficiently and reliably, ensuring that the computational and memory demands are managed effectively across the network: 

\begin{itemize}
\item {\texttt{$Clients$}}: $Clients$ possess a dataset and a pre-trained LLM but often lack the computational resources necessary for efficient inference. These users initiate the inference tasks within the FL framework by providing the necessary input tokens and their associated embeddings. The initialized parameters of the pre-trained model are also sent to the FL nodes. To handle the computationally intensive aspects of LLM inference, $Clients$ delegate the execution of transformer blocks to $Servers$, who perform the model-parallel computations for processing the input and intermediate data (e.g. model parameters, accumulated gradients) through the layers of the LLMs. After successful training and verification, $Clients$ aggregate the updates from the FL process to continually refine and update the global model. The tasks are repeated until the desired level of convergence is achieved.  
\item {\texttt{$Servers$}}: This group of users are the majority of this system equipped with the requisite GPU resources and are arranged in a sequence to process the computations forwarded by $Clients$. They are active participants in the transformer-based model-parallel FL process, which enhances scalability and resource efficiency for LLM inference. Initially, a specific threshold of hardware resources is established to select participants with adequate computational power. The process begins with the first $Server$, which receives embedding data from a $Client$ and processes the initial layer of the LLM. Subsequent $Server$ sequentially handle the remaining layers of the transformer block. To ensure the reliability and integrity of the FL process, each $Server$'s output is scrutinized by $Verifiers$ in a decentralized manner. By optimizing the use of computational resources across a distributed network, the resource required per user can be reduced and the inference of LLMs can be more economically feasible for a broader range of users. 
\item {\texttt{$Verifiers$}}: It is also crucial to ensure that all the $Servers$ participating the FL have no malicious intent, contributing positively to the federated system. Therefore, to make sure the integrity and reliability of the system, an incentive mechanism is utilized. $Verifiers$ monitor the behavior of each $Server$ by calculating a trust value based on predefined algorithm before the aggregation of updates to the global model. $Server$ that meet or exceed a certain trust threshold are allowed continued access to the global model. Conversely, $Servers$ that fail to meet this threshold are excluded from the system, with their computational tasks reassigned to other trusted $Servers$. 
\end{itemize}

\subsection{Incentive Mechanism}
Incentive mechanisms play a crucial role in ensuring the integrity and reliability of training results within FL environments. To ensure correct training results and reliable intermediate outputs, an incentive mechanism is applied across the server network. Since each layer of the model does not produce a final output that can be directly evaluated, intermediate outputs must be validated. One approach is to establish an expected output for layer $i$ based on historical data, providing a benchmark against which actual outputs can be compared. Alternatively, a set of validation data could be processed through the model layers computed by trusted $Verifiers$. Then the $Verifiers$ can estimate the accuracy $acc_i$ of layer $i$ by comparing the intermediate outputs from layer $i$ against its expected outputs. 

In this incentive mechanism, a $Trust Score$ is calculated for each $Server$. The $Verifiers$ utilize the Trust Score algorithm, denoted as $S_i$, to quantify the reliability of the $i^{th} Server$ within the model-parallel FL framework. It is computed as follows:

\begin{equation}
    \text{Trust Score (S)}_i = \frac{\text{acc}_i \times l_i}{\text{max(l)}} \times \text{w}_i
\end{equation}

where $acc_i$ represents the accuracy achieved by the $i^{th} Server$ on its assigned tasks, indicating the $Server$'s proficiency in processing its portion of the model. $l_i$ denotes the number of model layers that the $i^{th} Server$ is responsible for. $Server$ with more substantial computational resources may handle more layers, contributing to a higher computational load. $max(l)$ is the maximum number of layers any $Server$ within the network is handling, which serves to normalize the effect of layer distribution across all $Servers$, ensuring that their contributions are evaluated on a comparable scale. $w_i$ is a weighting factor applied to adjust and ensure that the $Trust Score S_i$ remains bounded between 0 and 1.   

Additionally, a threshold $\theta$ is implemented to manage the participation of $Servers$ in the FL network. This threshold is crucial for filtering out potentially malicious or under-performing $Servers$ while incentivizing those that meet or exceed the expected performance standards. Specifically, for each $Server$, the $Trust Score$ is compared to the threshold $\theta$ as shown in the following formula. If $\text{Trust Score (S)}_i$ is equal to or greater than $\theta$, the $Server$ remains active and may receive incentives for its contributions. Conversely, if $\text{Trust Score (S)}_i$ is less than $\theta$, the $Server$ is deactivated to prevent any negative impact on the model's training and inference processes. The tasks of the deactivated $Server$ will be reassigned to other qualified $Servers$. This dynamic verification is critical for maintaining the overall reliability and efficiency of the distributed learning environment. 

\begin{equation}
    \text{Server Status (i)} = 
    \begin{cases}
        \text{Trust Score (S)}_i \geq \theta & \text{activate} \\
        \text{Trust Score (S)}_i < \theta & \text{deactivate}
    \end{cases}
\end{equation}

\section{Algorithm Optimizations}
\label{sec:AlgorithmOptimizations}

This section discusses the advanced optimizations implemented within the transformer algorithms of the {\itshape eFedLLM} framework, targeting enhanced operational efficiency. The focus is primarily on three critical areas: optimizing matrix multiplication using memory hierarchy strategies, improving matrix transfer efficiency through Singular Value Decomposition (SVD), and enhancing verification processes. By refining these processes, we aim to streamline communications, reduce latency and boost the overall efficiency in this system. 

Before delving into the specific optimizations, it is essential to establish a common understanding of the notations used throughout this section. Table~\ref{notation} summarizing the key symbols and their meanings. With these notations defined, the subsequent discussion explores the implementation of each optimization strategy.

\begin{table}[]
\caption{Notation Table}
\label{notation}
\begin{tabular}{ll}
\toprule
Notation & Meaning \\
\midrule                           
\( R_t \) & Reduction in memory read time when using Federated Learning over a centralized model \\
\( T_c \) & Memory read time in a centralized model \\
\( T_f \) & Memory read time in a Federated Learning model \\
\( W \) & Weight matrix in Federated Learning \\
\( W_k \) & Low-rank approximation matrix of the weight matrix \\
\( U_k \) & Matrix of left singular vectors after retaining \( k \) singular values \\
\( \Sigma_k \) & Diagonal matrix after retaining \( k \) singular values \\
\( V_k^\top \) & Matrix of right singular vectors after retaining \( k \) singular values (transposed) \\
\( P \) & Cumulative energy ratio, accuracy of low-rank approximation \\
\( \sigma_i \) & The \( i^{th} \) singular value of the matrix \\
\( e \) & Desired accuracy of low-rank approximation matrix to be retained \\
\( L \) & Total number of layers \\
\( H \) & Total number of attention heads \\
\( P_{total} \) & Overall final accuracy \\
\( e_{ij} \) & Expected accuracy for each attention head \( j \) at layer \( i \) \\
\( \hat{W} \) & Weight matrix after truncated \\
\( \hat{U} \) & Matrix of left singular vectors after truncated \\
\( \hat{V}^T \) & Matrix of right singular vectors after truncated \\
\( \hat{\sigma_i} \) & Diagonal matrix after truncated \\
\( Z \) & Represents \( QK^T \), the dot product of \( Q \) and \( K \) \\
\( Z' \) & The result after subtracting a constant \( \hat{z}_v \) from \( Z \) \\
\( \hat{z}_v \) & A constant used for the shift-invariance property of the softmax function \\
\( Y \) & The result of applying negative K-digit base-b number transformation to \( Z' \) \\
\( b \) & The base used for K-digit number transformation \\
\bottomrule
\end{tabular}
\end{table}

\subsection{Matrix Multiplication Optimization}
Matrix operation and data transfer are two computationally intensive tasks central to the performance of LLM operations, particularly in the context of transformer architectures. The conventional centralized learning approach often suffers from inefficiencies due to frequent and extensive data reads from global memory during matrix operations. Each element of the resultant matrix in a matrix multiplication involves multiple reads from the memory for each element calculation, significantly increasing the time and computational overhead.  

FL, contrasted with centralized approaches, can optimize this process by leveraging distributed computational resources. By employing a hierarchy structure depicted in Figure \ref{fig:Hierarchy}, each handling a segment of the computation, federated learning reduces the frequency of memory reads. The matrices are read once globally, and intermediate results are temporarily stored in faster, locally accessible block memory. The final results are then compiled from these intermediate states and written back to the global memory.

Figure \ref{fig:Hierarchy} represents the hierarchical memory structure utilized in a FL environment where GPUs are grouped into blocks. Each block contains multiple GPUs, and they collaboratively work on specific layers of a Transformer model. The data flow starts from the global memory, which is slower compared to block memory. During the computation phase, each block reads data from the global memory. The GPUs within a block process this data by performing computations such as forward and back propagation, and gradient calculations, all of which are part of training a ML model like a Transformer. This computation occurs locally within each block to minimize the latency involved in accessing the slower global memory. After processing, intermediate results are stored temporarily within the block memory. Once all blocks have processed their respective data, the aggregated results are written back to the global memory. This setup significantly enhances efficiency by reducing the frequent read and write operations to the slower global memory, aligning with the principles of federated learning to distribute computational tasks efficiently across multiple devices.

\begin{figure}[ht]
  \centering
  \includegraphics[width=0.70\textwidth]{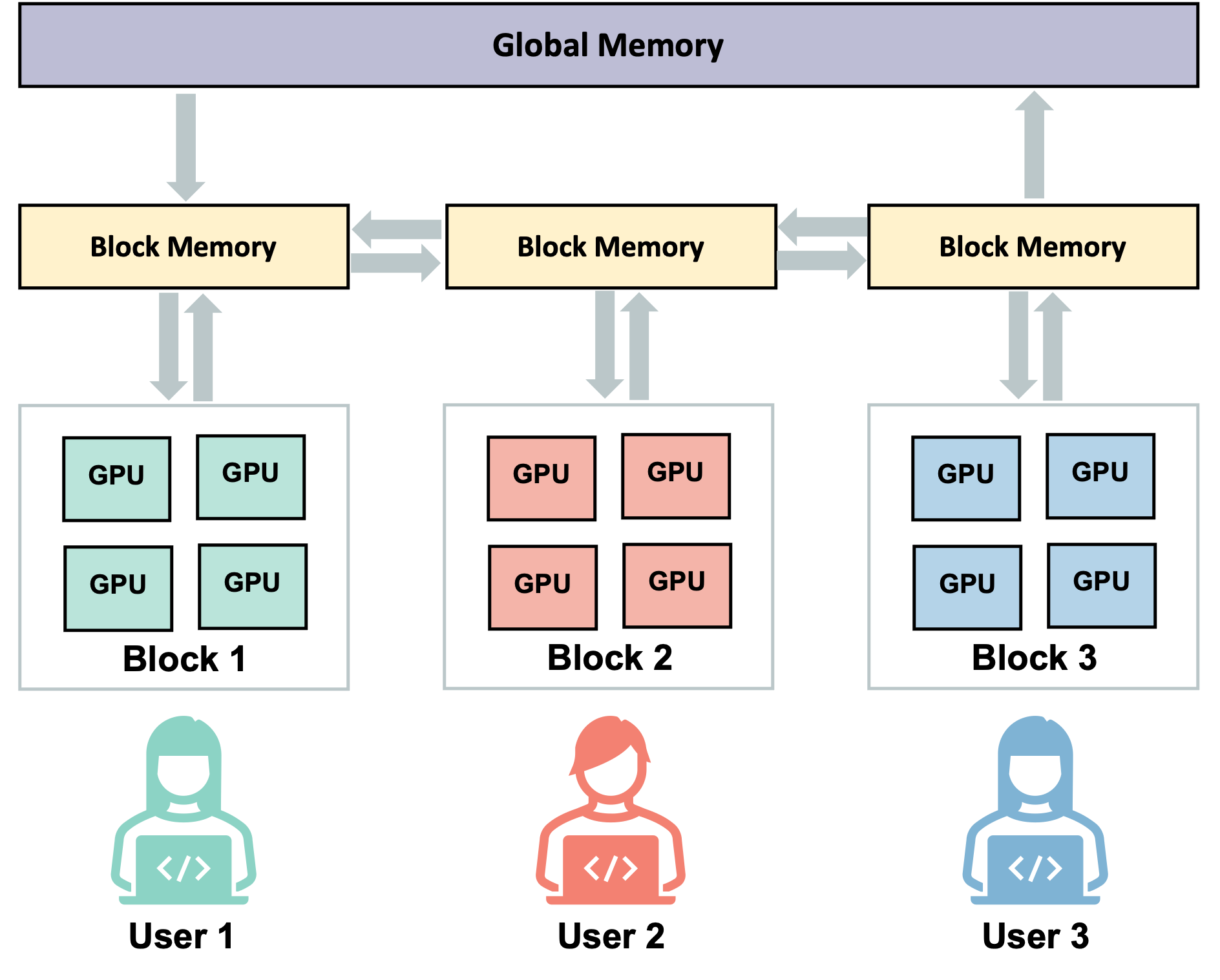}
  \caption{Memory Hierarchy}
  \label{fig:Hierarchy}
  \Description{}
\end{figure}

For example, consider the product matrix $C = A \times B$ in equatio~\ref{example1}, where $c_{ij} = a_{i1}b_{1j} + a_{i2}b_{2j} + a_{i3}b_{3j}$. In centralized learning environments, the computation of each element of $C$ necessitates repeated reads from the global memory. For $c_{11}$, it requires reading three elements from row one of matrix $A$ and three corresponding elements from column one of matrix $B$, totaling six memory reads. Given that matrix $C$ is a $3 \times 3$ matrix, computing all its elements will involve $6 \times 9 = 54$ memory reads from global memory. 

\begin{equation}
    C = A \times B = 
    \begin{pmatrix} 
    a_{11} & a_{12} & a_{13} \\ 
    a_{21} & a_{22} & a_{23} \\ 
    a_{31} & a_{32} & a_{33} 
    \end{pmatrix}
    \times
    \begin{pmatrix} 
    b_{11} & b_{12} & b_{13} \\ 
    b_{21} & b_{22} & b_{23} \\ 
    b_{31} & b_{32} & b_{33} 
    \end{pmatrix}
\label{example1}
\end{equation}

In contrast, FL significantly reduces memory read operations by utilizing distributed computation. In this approach, entire matrices $A$ and $B$ are read from the global memory just once each, leading to a total of $9 + 9 = 18$ reads. The intermediate results—specific to each part of the computation—are stored in faster, local block memory and are later aggregated. 

Table~\ref{time_compare_Two_Matrices} exemplifies the drastic reduction in memory read times, highlighting the scalability and efficiency gains in federated settings. As matrix dimensions increase, the benefits of federated learning become even more pronounced, showcasing significant reductions in memory read operations compared to centralized learning models. This approach not only speeds up computations but also minimizes energy consumption and potential bottlenecks associated with large-scale data processing in LLMs.

\begin{table}[]
\caption{Memory Read Times Comparison - Two Matrices Multiplication}
\label{time_compare_Two_Matrices}
\begin{tabular}{llll}
\toprule
Matrix Dimension & Centralized Learning & Federated Learning & Reduced Time \\
\midrule                           
5      & 250        & 50        & 80.00\%    \\
10     & 2,000      & 200       & 90.00\%    \\
100    & \(2 \times 10^{6}\)  & 20,000 & 99.00\%    \\
10,000 & \(2 \times 10^{12}\) & \(2 \times 10^{8}\) & 99.99\%    \\
\bottomrule
\end{tabular}
\end{table}

More generally, we have this theorem:
\begin{theorem}
    Let \(A_{m \times n}\) and \(B_{n \times k}\) be matrices, and let \(C = A \times B\) represent the matrix multiplication. Then, when deploying a Federated Learning model over a centralized model, the reduction in memory read time, \(R_t\), can be expressed as: 
    \begin{equation}
    R_t = 1 - \frac{1}{2k} - \frac{1}{2m}
    \end{equation}
\end{theorem}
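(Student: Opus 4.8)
The plan is to derive $R_t$ directly from its definition as one minus the ratio of federated to centralized memory-read counts, $R_t = 1 - T_f/T_c$, using the two read-counting conventions already illustrated on the $3\times 3$ example. So the first task is to express each of $T_c$ and $T_f$ in closed form as functions of $m$, $n$, $k$, after which the result is pure algebra.

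First I would count the centralized reads $T_c$. In the centralized model every entry $c_{ij}$ of $C = A \times B$ is computed independently, and each such entry requires reading all $n$ entries of row $i$ of $A$ together with all $n$ entries of column $j$ of $B$, i.e. $2n$ reads per entry. Since $C$ is $m \times k$ and hence has $mk$ entries, summing over them gives $T_c = 2n \cdot mk = 2mnk$. As a sanity check, the square case $m = n = k = d$ yields $2d^3$, reproducing the centralized column of Table~\ref{time_compare_Two_Matrices} (e.g. $2\cdot 5^3 = 250$).

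Next I would count the federated reads $T_f$. The essential modeling assumption of the hierarchical scheme in Figure~\ref{fig:Hierarchy} is that each input matrix is read from global memory exactly once, after which intermediate products are held in fast block memory and aggregated locally. Matrix $A$ then contributes $mn$ reads and matrix $B$ contributes $nk$ reads, so $T_f = mn + nk$; in the square case this is $2d^2$, again matching the table. The final step is algebraic: substituting into $R_t = 1 - T_f/T_c$ gives
$$R_t = 1 - \frac{mn + nk}{2mnk} = 1 - \frac{n(m+k)}{2mnk} = 1 - \frac{m+k}{2mk},$$
and splitting the last fraction as $\frac{m+k}{2mk} = \frac{1}{2k} + \frac{1}{2m}$ produces the claimed identity $R_t = 1 - \frac{1}{2k} - \frac{1}{2m}$.

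I do not anticipate any genuine difficulty in the algebra — the cancellation of $n$ is what makes the reduction depend only on the output dimensions $m$ and $k$. The one substantive point, and the only place where the argument could be contested, is the justification of the two counting rules, in particular the claim that the federated scheme reads each input exactly once rather than, say, once per block or once per output entry. I would therefore state explicitly that $T_f = mn + nk$ presumes a single global read of each of $A$ and $B$, and would flag that this is an idealized best case; any scheme that re-reads inputs across blocks would inflate $T_f$ and correspondingly shrink $R_t$, so the theorem should be read as the reduction attainable under the hierarchy's intended single-read data flow.
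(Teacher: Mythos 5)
Your proposal is correct and follows essentially the same route as the paper's own proof: identical counts $T_c = 2mnk$ and $T_f = mn + nk$, followed by the same algebraic simplification of $1 - T_f/T_c$. Your added sanity check against Table~\ref{time_compare_Two_Matrices} and your explicit caveat that $T_f$ assumes a single global read of each input are reasonable clarifications, but they do not change the argument.
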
  
\begin{proof}
    Let the centralized read time \( T_c \) be given by 
    \[
    T_c = (n + n) m k = 2nmk
    \]  
    which accounts for reading each matrix element separately for each element of the result matrix.

    In Federated Learning approach, the read time \( T_f \) can be optimized to 
    \[
    T_f = m n + n k
    \]  
    since each matrix is read only once per batch and the intermediate results are stored and reused.

    Thus, the reduction in read time \( R_t \) can be calculated as:
    \[
    R_t = \frac{T_c - T_f}{T_c} = \frac{2nmk - (mn + nk)}{2nmk} = \frac{2mk - m - k}{2mk} = 1 - \frac{1}{2k} - \frac{1}{2m}.
    \]
\end{proof}

In the Transformer model, the attention mechanism's $Softmax$ function and the feed-forward networks represent the most computationally intensive operations, both involving matrix multiplication. In the attention mechanism, matrix multiplication appears in computing $QK^T$, which is essential for determining the attention scores. In the feed-forward networks the operation often involves multiplying the input matrix $X$ sequentially with two weight matrices. Our hierarchical computational structure becomes particularly advantageous when dealing with the multiplication of three or more matrices.

The exploration of the Transformer model also extends to evaluating the impact of addition, subtraction, and division operations on matrices. However, these operations have not been the focus of our optimization efforts within the framework. This decision stems from the observation that in addition and subtraction operations, each element of the resulting matrix necessitates accessing the corresponding elements in the original matrices. The number of memory accesses required remains consistent regardless of whether the model is operating under centralized or federated learning paradigms with a hierarchical memory structure. Consequently, the potential for performance improvement through optimization in addition and subtraction is limited. Additionally, matrix division is not directly included in transformer models. The only division-like operation in transformer models occurs within the $Softmax$ function of the attention mechanism. This operation involves exponentiation of each element of the input matrix and then normalizing these values to ensure that the output sums to one, which is a form of scalar division across elements rather than matrix division. 

In addition, we do not include optimizations for layer normalization and residual connections because their computational demands are relatively minimal compared to the more intensive operations of the attention mechanism and feed-forward networks. Furthermore, both layer normalization and residual connections primarily involve element-wise operations—specifically addition and multiplication—rather than matrix operations. These operations are designed to stabilize the learning process and facilitate the flow of gradients through the network during training, but they do not require the complex and resource-intensive matrix calculations found in other parts of the transformer architecture.

\subsection{Matrix Transfer Optimization}

Federated Learning (FL) requires frequent exchange of large model parameters or gradients between distributed computing devices, making network transmission one of the major limitations of FL. The high communication overhead, bandwidth limitations, latency, and network instability can hinder the effective implementation of FL.To address these challenges, a method is proposed that combines Singular Value Decomposition (SVD) to reduce the amount of transmitted data \cite{10.1145/3534678.3539402, doi:10.1073/pnas.2024789118}, thereby reducing communication latency and improving overall FL efficiency.

Given a weight matrix $W \in \mathbb{R}^{m \times n}$, which represents the parameters that need to be transmitted in Federated Learning, Singular Value Decomposition (SVD) is first applied to the matrix $W$:

\begin{equation}
    W = U \Sigma V^\top
\end{equation}

where $U \in \mathbb{R}^{m \times m}$ and $V \in \mathbb{R}^{n \times n}$ are orthogonal matrices, and $\Sigma \in \mathbb{R}^{m \times n}$ is a diagonal matrix containing the singular values of the matrix $W$. To reduce the dimensionality, only the largest $k$ singular values and their corresponding singular vectors are retained, resulting in a low-rank matrix.:

\begin{equation}
    W_k = U_k \Sigma_k V_k^\top
\end{equation}

where $U_k \in \mathbb{R}^{m \times k}$, $\Sigma_k \in \mathbb{R}^{k \times k}$, and $V_k \in \mathbb{R}^{n \times k}$.

The low-rank matrices $U_k$, $\Sigma_k$, and $V_k^\top$ are transmitted to other devices. Upon receiving these matrices, the devices use them to reconstruct an approximation of the original matrix $W_k$. The low-rank approximation $W_k$ is the matrix that retains the first $k$ singular values, and the retained energy is $\sum_{i=1}^{k} \sigma_i^2$, where $\sigma_i$ is the singular value of matrix, so the accuracy retained by the low-rank approximation \( W_k \) can be estimated using the cumulative energy ratio. The retained accuracy is defined as:

\begin{equation}
P = \frac{\sum_{i=1}^{k} \sigma_i^2}{\sum_{i=1}^{r} \sigma_i^2}
\end{equation}

where \( r = \min(m, n) \).

The reconstruction accuracy depends on the number of singular values retained during the SVD process. Since the singular values tend to decay rapidly. In practice, retaining the top 40\%-50\% of the singular values typically preserves 90\%-99\% of the matrix's energy. 

To understand the efficiency gains of using SVD in FL, an analysis of the complexity of SVD and the communication savings is required.

Performing SVD on a matrix \( W \in \mathbb{R}^{mn} \) has a computational complexity of \( O(\min(m^2n, mn^2)) \). To expedite the low-rank process, we have decided to employ the fast SVD algorithm developed by Chen Xu, Weiwei Xu \cite{10.1093/nsr/nwad083}, which achieves a computational complexity of 
\( O(mnr) \). 

Despite the high computational cost associated with SVD for large matrices, it is executed only once per communication round. The total number of elements to be transmitted is \( m \times n \). After performing SVD low-rank approximation on an \( m \times n \) matrix, the transmitted matrices are \( U_k \in \mathbb{R}^{m \times k} \), \( \Sigma_k \in \mathbb{R}^{k \times k} \), and \( V_k^\top \in \mathbb{R}^{k \times n} \). Therefore, the total size of the transmitted data is \( mk + k^2 + kn \). When \( k \ll m \) and \( k \ll n \), the reduction in transmission bandwidth can be approximated as \( (m-k)(n-k) \). Similarly, retaining the top 40\% to 50\% of the singular values can save 40\% to 60\% of the bandwidth while maintaining a high level of accuracy.

In this case, even though SVD itself has a certain computational complexity, it can still bring significant efficiency improvements to federated learning in multiple rounds of communication.

\begin{figure}[t]
  \centering
  \includegraphics[scale=0.07]{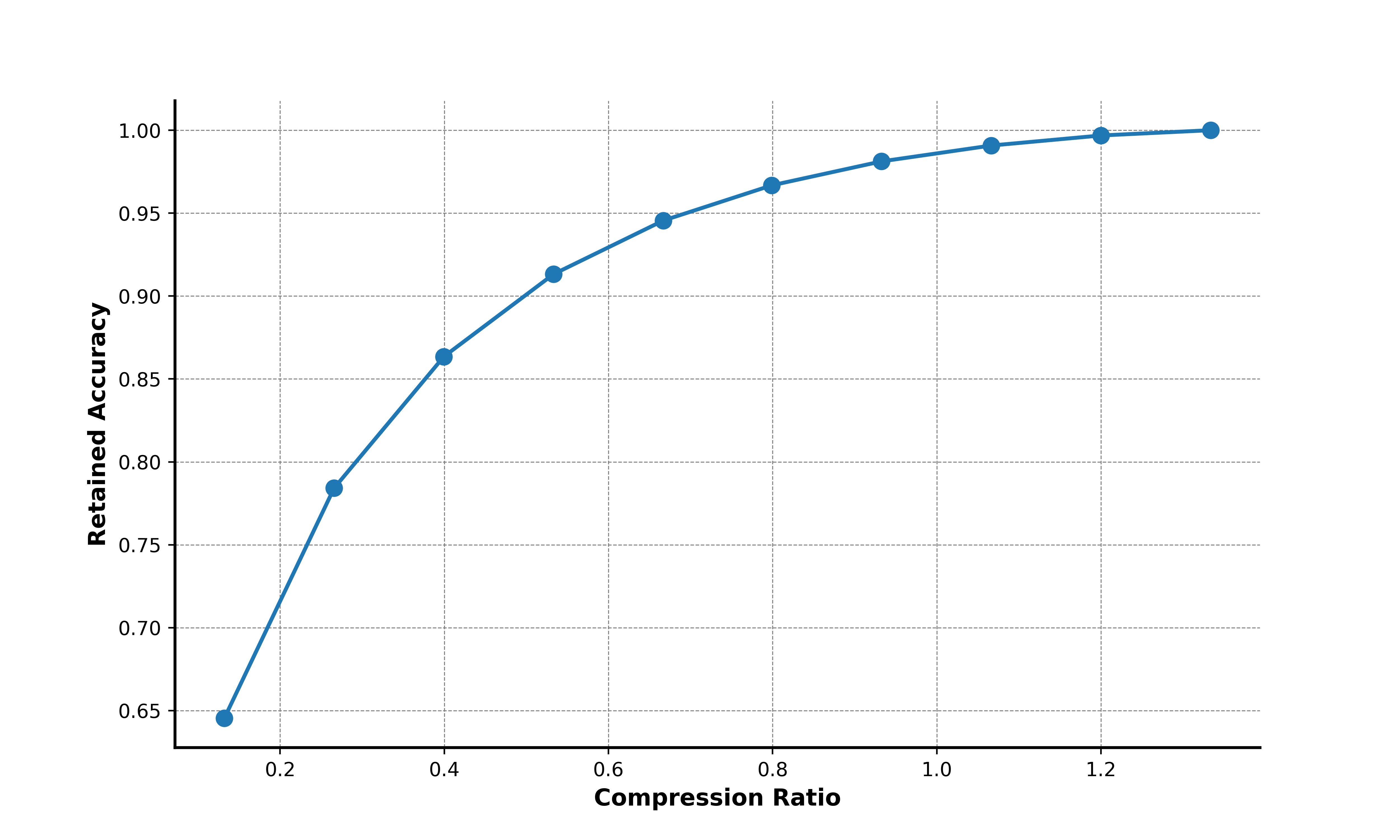}
  \caption{The relationship between the compression ratio and the retained accuracy when preserving different numbers of singular values.}
  \label{fig:example}
  \Description{The relationship between the compression ratio and the retained accuracy when preserving different numbers of singular values.}
\end{figure}

The compression ratio can be calculated as: 
\begin{equation}
    Compression Ratio = \frac{Compressed Matrix Size}{Original Matrix Size} = \frac{(m + n + 1) \times k}{m \times n}
\end{equation}

To verify the effectiveness of the proposed method, Singular Value Decomposition (SVD) was applied to the \texttt{h.1.attn.c\_attn.weight} layer of the GPT-2 model, which originally had a size of \texttt{(768, 2304)}. After retaining the top 40\% of the singular values, the compression ratio reached 53.32\%, while the reconstructed matrix retained 91.32\% of the original matrix's energy. Furthermore, the relationship between the compression ratio and the retained accuracy was tested by preserving different numbers of singular values, and the results are shown in Figure~\ref{fig:example}. This indicates that the SVD method can significantly reduce the amount of transmitted data while maintaining a high level of model accuracy.

These results further demonstrate the potential advantages of SVD in federated learning. Even when retaining only a portion of the singular values, it effectively reduces communication overhead while preserving high model accuracy. This approach can notably improve the overall efficiency of federated learning, especially in scenarios involving multiple communication rounds.

Moreover, this proposition~\ref{proposition} outlines that applying SVD to weight matrix $W$ can approximate it to lower rank while maintaining a considerable amount of the matrix's original information:
\begin{proposition}
    Let weight matrix $W \in \mathbb{R}^{m \times n}$. If $W = U \Sigma V^T$ is the SVD form of $W$, for any $\epsilon > 0$, matrix $W$ can be approximated by a rank $k$ matrix $W_k = U_k \Sigma_k V_k^T$ to a relative tolerance $\epsilon > 0$ in the sense that: there exists rank-k matrix $W_k$, with $k \ge C|log\epsilon|$ for some constant $C$, such that:
    \begin{equation}
        \|W - W_k\| \leq \epsilon
    \end{equation}
\label{proposition}
\end{proposition}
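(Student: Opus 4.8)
The plan is to reduce the statement to the Eckart--Young--Mirsky theorem together with a quantitative hypothesis on how fast the singular values of $W$ decay. First I would recall that, for the truncated SVD $W_k = U_k \Sigma_k V_k^\top$ obtained by keeping the $k$ largest singular values, $W_k$ is the optimal rank-$k$ approximant and its error is governed exactly by the discarded singular values: in the spectral norm $\|W - W_k\| = \sigma_{k+1}$, and in the Frobenius norm $\|W - W_k\|_F = \bigl(\sum_{i>k}\sigma_i^2\bigr)^{1/2}$. This identifies the $W_k$ in the statement with the truncated SVD and converts the claim into a bound on $\sigma_{k+1}$ (or on the spectral tail).

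The decisive step is to make precise the phrase used earlier in the paper that ``the singular values tend to decay rapidly.'' A logarithmic bound of the form $k \ge C|\log\epsilon|$ cannot hold for an arbitrary $W$; if all singular values were equal one would need $k = r$ regardless of $\epsilon$. Hence the proposition must be read under a geometric-decay hypothesis, which I would state explicitly: assume $\sigma_{i+1} \le \rho\,\sigma_i$ for some fixed $0 < \rho < 1$, equivalently $\sigma_{k+1} \le \sigma_1 \rho^{k}$. Interpreting $\epsilon$ as a relative tolerance (normalizing by $\|W\| = \sigma_1$), the estimate becomes $\|W - W_k\|/\|W\| \le \rho^{k}$.

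It then remains to solve for $k$. Requiring $\rho^{k} \le \epsilon$ and taking logarithms (the inequality flips because $\log\rho < 0$) gives $k \ge \log\epsilon/\log\rho = |\log\epsilon|/|\log\rho|$, so the constant is $C = 1/|\log\rho|$ and any $k \ge C|\log\epsilon|$ forces $\|W - W_k\| \le \epsilon$. For the absolute (non-relative) version one carries the factor $\sigma_1$ through, replacing $\epsilon$ by $\epsilon/\sigma_1$ and absorbing the resulting additive term into $C$; the Frobenius case instead bounds the geometric tail $\sum_{i>k}\sigma_i^2 \le \sigma_1^2\rho^{2k}/(1-\rho^2)$, which again only alters the constant, not the $|\log\epsilon|$ scaling.

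The main obstacle is precisely this hidden hypothesis: the bound is vacuous, indeed false, without some assumption forcing geometric decay of the spectrum, so the honest proof must surface and justify that assumption rather than treat it as automatic. A secondary point to pin down is which norm and which notion of tolerance (absolute versus relative) is intended, since the displayed inequality $\|W-W_k\|\le\epsilon$ and the phrase ``relative tolerance'' are in mild tension; resolving this fixes whether the normalizing factor $\sigma_1$ appears inside $C$ or not, but does not affect the core logarithmic dependence on $\epsilon$.
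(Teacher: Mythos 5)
Your proof is correct, but it does considerably more than the paper itself does: the paper never actually proves this proposition. All it offers is the trailing remark that ``$k$ is chosen such that the sum of the discarded singular values squared is less than or equal to $\epsilon^2$,'' which is just the Frobenius-norm Eckart--Young identity $\|W-W_k\|_F^2=\sum_{i>k}\sigma_i^2$ restated as a selection rule for $k$, plus the earlier informal assertion that ``the singular values tend to decay rapidly.'' Your argument supplies exactly the two ingredients the paper leaves implicit: (i) the optimality of the truncated SVD (Eckart--Young--Mirsky), which identifies the error with $\sigma_{k+1}$ or the spectral tail, and (ii) the geometric-decay hypothesis $\sigma_{i+1}\le\rho\,\sigma_i$ that is genuinely required to convert an error bound into the claimed scaling $k\ge C|\log\epsilon|$, with the constant identified as $C=1/|\log\rho|$. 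Your counterexample observation is an accurate criticism of the statement rather than a gap in your proof: with a flat spectrum, $\|W-W_k\|=\sigma_1$ for every $k<r$, so no logarithmic-in-$\epsilon$ rank can suffice, and the paper's choice-of-$k$ rule guarantees the error bound only for \emph{some} $k$ (possibly $k=r$), saying nothing that justifies the $|\log\epsilon|$ dependence. In short, you prove a corrected, properly hypothesized version of the proposition; the paper asserts it without proof, and as written it does not hold for arbitrary $W$. The only point you might tighten is the reading of ``relative tolerance,'' which you already flag: normalizing by $\sigma_1$ merely moves that factor into the constant $C$, so your resolution of the ambiguity is harmless either way.
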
 

where $k$ is chosen such that the sum of the discarded singular values squared is less than or equal to $\epsilon^2$. This approach is valuable in contexts such as LLM inference, where efficient data handling is necessary without compromising the accuracy of the data. 

In a complete Transformer model, federated learning typically involves dynamically splitting the model architecture and distributing it across different devices, depending on their computational power and network bandwidth. For the purpose of theoretical analysis, a standard partitioning approach is assumed in this paper. Specifically, one encoder layer is taken as an example, where the embedding layer and position embedding are grouped together; each attention head in the multi-head attention mechanism is handled separately; and finally, the feedforward network is treated as a distinct module applied across the entire Transformer model.

It is assumed that the SVD dimensionality reduction method is applied during the input and output processes of each attention head. The following formula is used to constrain the accuracy and determine the number of singular values retained during each reduction.

\begin{equation}
    \sum_{i=1}^{\text{Rank}(W_k)} \sigma_i^2 \geq e \sum_{i=1}^{\text{Rank}(W)} \sigma_i^2
\end{equation}

Where $\sigma_i$ represents the singular values of weight matrix $W$, and $e$ denotes the desired accuracy to be retained.

After processing through multiple encoder and decoder layers, the overall model accuracy can be approximately represented as the cumulative effect of this method. 

\begin{equation}
    P_{\text{total}} = \prod_{i=1}^{L} \left(\frac{1}{H} \sum_{j=1}^{H} e_{ij} \right)
\end{equation}

Where $P_{\text{total}}$ represents the overall final accuracy, $L$ denotes the total number of layers, $H$ is the number of attention heads, and $e_{ij}$ represents the expected accuracy for each attention head j at layer i.

However, it is important to note that as the number of layers increases, the accuracy retention tends to decrease layer by layer. Therefore, when designing the partitioning strategy, it is crucial to carefully select the dimensionality reduction accuracy e for each layer to balance between bandwidth and accuracy.

\subsection{Combination of Memory Hierarchy and SVD}
In this subsection, we combine the above methods together to analyze the overall optimization. 

After rank-k compression through SVD, we truncate the weight matrix $W$ according to the compression ratio. Then the truncated weight matrix $\hat{W}$ becomes:
\begin{equation}
    \hat{W} = \hat{U} \times \hat{\Sigma} \times \hat{V}^T
\end{equation}
where $\hat{U} \in \mathbb{R}^{m \times \hat{k}}$, $\hat{\Sigma} \in \mathbb{R}^{\hat{k} \times \hat{k}}$, and $\hat{V} \in \mathbb{R}^{n \times \hat{k}}$. The number of rank after truncated is:
\begin{equation}
    \hat{k} = \frac{m \times n \times Compression Ratio}{m + n + 1}
\end{equation}

For comparing $WX$ and $\hat{W}X$, where $W, \hat{W} \in \mathbb{R}^{m \times n}$ are the weight matrices and $X \in \mathbb{R}^{n \times t}$ is the input, we have the comparison in Table~\ref{combination}.

\begin{table}[b]
\caption{Comparison of Original and Compressed Matrices With or Without Memory Hierarchy}
\label{time_compare}
\begin{tabular}{lll}
\toprule
Metric & Original \( WX \) & Compressed \( \hat{W}X \) \\
\midrule                           
Weight Matrix Storage Requirement & \( mn \) & \( (m+n+1)\hat{k} \) \\
Memory Read of Multiplication without Hierarchy & \( 2mnt \) & \( 2(m+n)\hat{k}t \) \\
Memory Read of Multiplication with Hierarchy & \( mn + nt \) & \( m\hat{k} + \hat{k} + n\hat{k} + nt \) \\
\bottomrule
\end{tabular}
\label{combination}
\end{table}

Consider the first linear layer in a BERT model, where the weight matrix \( W \in \mathbb{R}^{3072 \times 768} \), reflecting a common architecture choice where the model expands the embedding dimension 768 to a higher internal dimension 3072 as part of its feed-forward network. \( X \in \mathbb{R}^{768 \times 30} \) is the input matrix with sequence length equals 30, assuming that each of the 30 tokens is represented by a 768-dimensional vector as output from the previous layer. The multiplication $WX$ represents a core computation in this layer. Applying SVD and memory hierarchy optimization techniques, $WX$  is compressed to $\hat{W}X$. The results of comparing the memory read for $WX$ and $\hat{W}X$ with and without memory hierarchy are shown in Figure \ref{fig:compare}. The compression ratios is varied from 0.2 to 0.8. These results illustrate the impact of memory hierarchy and SVD method on data communication at various compression ratios ranging. In Figure \ref{fig:compare} (a), it is clear that with only the SVD method is applied, the more the matrix size is compressed, the less memory read is needed for $\hat{W}X$. In Figure \ref{fig:compare} (b), it shows a similar trend in reducing memory read but a reduced scale (from 1e8 to 1e6). This demonstrates the effectiveness of memory hierarchy in optimizing memory usage. This becomes particularly advantageous in environments with limited memory resources or where memory bandwidth is a bottleneck.

\begin{figure}[h]
  \centering
    \begin{subfigure}[b]{0.47\textwidth}
        \centering
        \includegraphics[width=\textwidth]{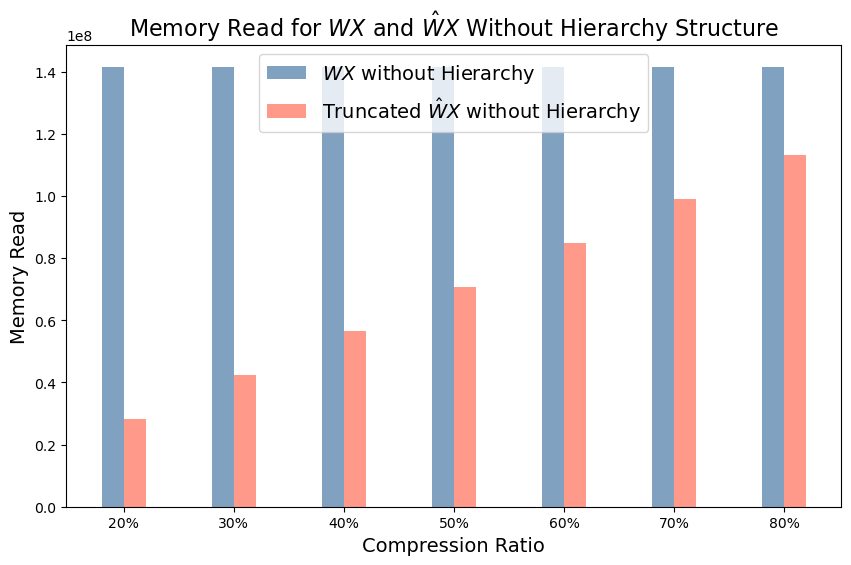}
        \caption{Memory Read Without Hierarchy}
        \label{fig:memory_read_without_hierarchy}
    \end{subfigure}
    \hfill
    \begin{subfigure}[b]{0.47\textwidth}
        \centering
        \includegraphics[width=\textwidth]{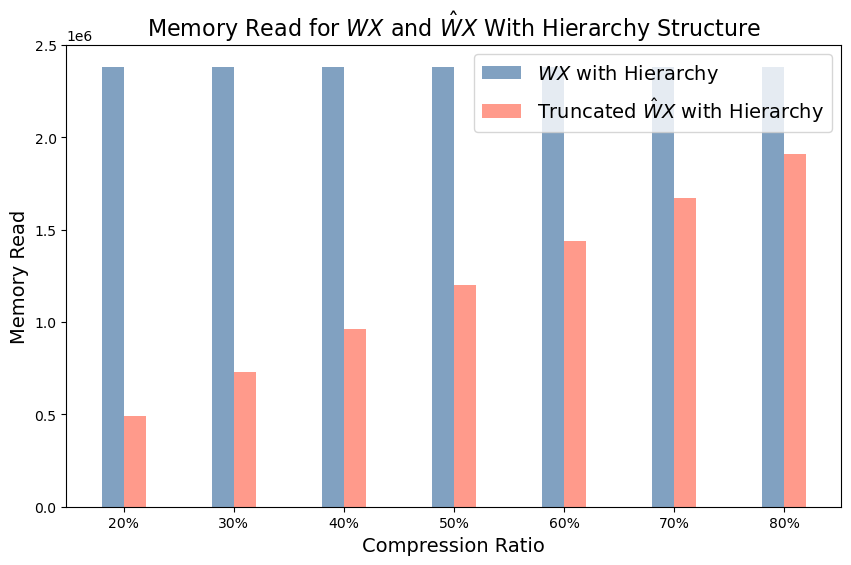}
        \caption{Memory Read With Hierarchy}
        \label{fig:memory_read_with_hierarchy}
    \end{subfigure}  
  \caption{Memory Read Comparative Analysis in BERT Model}
  \label{fig:compare}
  \Description{Comparative Analysis of Memory Read Time with and without Memory Hierarchy Strategies and Singular Value Decomposition (SVD)}
\end{figure}

To analyze the impact of applying both SVD and memory hierarchy techniques on reducing bandwidth usage, bandwidth reduce rate is utilized: 
\begin{equation}
    Bandwidth Reduce Rate = 1 - \frac{Optimized Total Memory Access}{Original Total Memory Access}
\end{equation}
where total memory access is calculated as:
\begin{equation}
    Total Memory Access = Weight Matrix Read + Input Matrix Read + Output Matrix Write
\end{equation}

Again consider the first linear layer of a BERT model. Let batch size is 10 and matrices stored as 32-bit floating point numbers (4 bytes each). The relationship between the compression ratio and the bandwidth reduce rate is illustrated in Figure \ref{fig:Bandwidth Reduce Rate}. By implementing the algorithmic optimizations, it becomes evident that the bandwidth reduce rate decreases as the compression ratio increases. This trend illustrates that greater compression of the matrices leads to more significant reductions in bandwidth usage. Specifically, setting the compression ratio to 0.7 can reduce the bandwidth by 60\% of what it was originally. This outcome confirms the effectiveness of our methods in lowering memory bandwidth demands, consequently improving the computational efficiency of deploying large models such as BERT.

 \begin{figure}[t]
  \centering
  \includegraphics[scale=0.47]{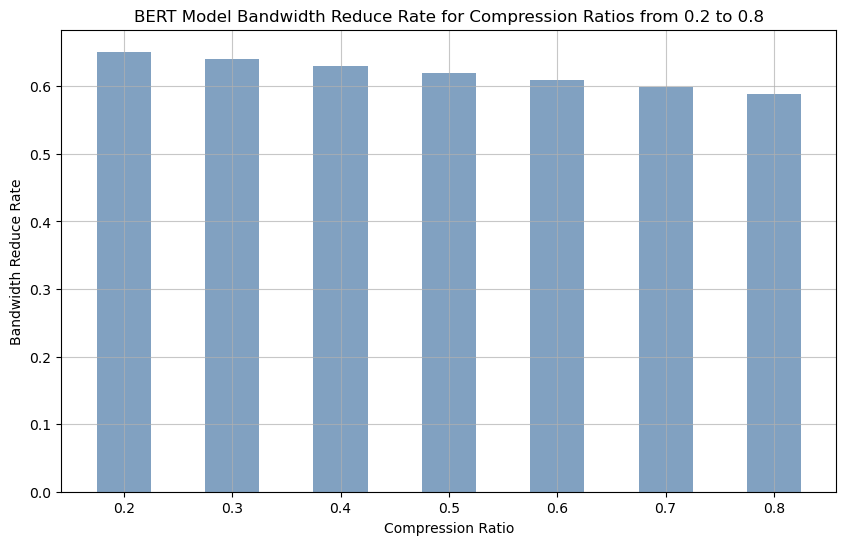}
  \caption{Bandwidth Reduce Rate Analysis in BERT Model}
  \label{fig:Bandwidth Reduce Rate}
  \Description{The relationship between the compression ratio and the bandwidth reduce rate is shown in this figure.}
\end{figure}

\subsection{Verification Optimization}
The $Verifiers$' verification process is also optimized. The optimization is again targeted at where the computation is most intense -- the $Softmax$ function. The $Softmax$ function transforms the score $z_v$ (derived from $QK^T$) into a probability distribution by exponentiating each score and normalizing these values by the sum of all score exponentials. This normalization ensures that the output of the $Softmax$ function sums to one and forms a valid probability distribution. This allows the model to softly select the amount of attention to place on each value in the sequence. Building on this fundamental operation, this subsection details specialized algorithmic enhancements aimed at optimizing the verification process, further refining the efficiency and accuracy of the model’s evaluative capabilities.

By splitting the calculation of $\exp(z_v)$ in equation~\ref{softmax} and the summation across multiple $Verifier$ nodes, the computational burden on any single $Verifier$ can be significantly reduced in a distributed manner. 

\begin{equation}
    \text{Softmax}(z)_v = \frac{\exp(z_v)}{\sum_{v'} \exp(z_{v'})}
\label{softmax}
\end{equation}

The Attention Mechanism Equation utilized in the Transformer models poses a unique challenge for verification due to its reliance on non-arithmetic operations, which do not compute numerical values directly: 
\begin{equation}
    \text{Attention}(Q, K, V) := \text{Softmax}\left(\frac{QK^T}{\sqrt{d}}\right) V
\end{equation}
Here, $Q$, $K$, and $V$ represent query, key, and value components essential to the model's ability to weigh and prioritize different parts of the input data based on relevance. The operation $QK^T$ computes the dot product, representing how queries are mapped to keys. It is an arithmetic operation which can be verified by existing methods \cite{chiesa2017zero}. Then $QK^T$ is scaled by $\frac{1}{\sqrt{d}}$ (where $d$ is the dimension of queries and keys), and normalized by the $Softmax$ function to generate probability between 0 and 1 to normalize the scores. This sophisticated equation highlights the non-linear and complex interdependencies modeled by the transformer layers.  

To address the efficient verification of such non-arithmetic elements in a high level of parallelization, particularly in the Attention mechanism, a specialized transformation is adopted from \cite{sun2024zkLLM}. 

Firstly, to simplify the expression, let:
\begin{equation}
    Z \leftarrow QK^T
\end{equation}

Next, the shift-invariance property of $Softmax$ is exploited by adding a constant $\hat{z_v}$ to $Z$:
\begin{equation}
    Z' := Z - \hat{z_v}
\end{equation}
It can be simply proved that $Z$ and $Z'$ lead to same result through $Softmax$ function:

\begin{proof}
    \[
    \text{Softmax}(z_v - \hat{z_v}) \\
    = \frac{\exp(z_v - \hat{z_v})}{\sum_{v'} \exp(z_{v'} - \hat{z_v'})} \\
    = \frac{\exp(z_v)\exp( - \hat{z_v})}{\sum_{v'} \exp(z_{v'})\exp( - \hat{z_v'})} \\
    = \frac{\exp(z_v)}{\sum_{v'} \exp(z_{v'})} \\
    = \text{Softmax}(z_v)
    \]
\end{proof}

Then negative K-digit based-b numbers transformation \cite{sun2024zkLLM} is applied: 
\begin{equation}
    Y = \exp(Z') \\
    = \exp\left( -\sum_{k=0}^{K-1} b^k Z(k) \right) \\
    = \prod_{k=0}^{K-1} \exp\left( -b^k Z(k) \right)  
\end{equation}

This transformation allows a complex exponential operation with large exponents to be turned into a cumulative multiplication calculation so that it can continue to be optimised using matrix multiplication optimisation. The challenge of non-arithmetic operations is particularly addressed by the use of table lookups (tlookup) installed for each term of the K-digit base-b transformation in the product. These lookups facilitate the handling of operations like the $Softmax$ function efficiently. In this way, the distributed collaboration of $Verifiers$ are able to verify the $Servers$ in an efficient manner.

\section{Related Works}
\label{sec:RelatedWorks}

This section reviews key developments in frameworks comparable to {\itshape eFedLLM}, focusing on their architectural and operational contributions to the field of LLMs. {\itshape opML} (Optimistic Machine Learning) \cite{conway2024opML} offers a blockchain-based solution to execute LLMs efficiently on standard PCs, broadening accessibility without relying on GPUs. Its core includes a Fraud Proof Virtual Machine (FPVM) to ensure computational integrity, a versatile ML Engine, and an Interactive Dispute Game for error resolution, significantly lowering costs and enhancing computational trustworthiness. However, the reliance on blockchain can introduce latency and scalability issues, potentially limiting its practical application in real-time scenarios.  {\itshape zkLLM} framework \cite{sun2024zkLLM} leverages specialized Zero-Knowledge Proofs (ZKPs) to secure the confidentiality and integrity of LLMs during inference processes, ensuring no sensitive data or model parameters are disclosed. It adapts ZKPs to complex operations such as the Attention mechanisms in LLMs, making model outputs trustworthy without revealing proprietary training data or techniques. Nevertheless, the computational overhead and complexity of ZKPs can be substantial, possibly hindering its efficiency and wider adoption. {\itshape PETALS} \cite{borzunov2022PETALS} aims to democratize LLM utilization through a collaborative platform that divides model layers across multiple servers, allowing for efficient resource use and latency reduction in model operations. It also supports model fine-tuning and sharing via a model hub, fostering community collaboration and resource-sharing among researchers and practitioners. However, its decentralized nature could complicate the consistency and synchronization of model updates, posing challenges in maintaining model accuracy and state across different servers. Furthermore, the incentive method in \cite{ding2022investigation} aligns with the approach in this paper, which includes smart contracts for automatically execute the incentive mechanisms in the distributed training process. \cite{ding2022survey} surveys the integration of blockchain technology with AI to mitigate issues like data misuse and enhance the robustness of AI models against malicious data or training contributions.

\section{Conclusions}
\label{sec:Conclusions}

This paper demonstrates an effective approach for enhancing the operational efficiency and accessibility of LLMs through the implementation of a transformer-based FL framework with model-parallel distributed training. By efficiently distributing computational and memory demands across the network, the proposed {\itshape eFedLLM} enables a broader range of users, particularly those with limited resources, to collaboratively train and utilize state-of-the-art LLMs. The integration of an incentive mechanism within the FL framework ensures the integrity and reliability of the training process by rewarding positive contributions and deterring malicious activities. Furthermore, the application of memory hierarchy strategies and SVD on weight matrices has proven to significantly reduce resource consumption and enhance system performance. The outcomes of this research not only optimize resource utilization but also democratize access to advanced AI technologies, allowing a diverse user base to contribute to and benefit from the development of trustworthy and efficient AI systems.

\bibliographystyle{ACM-Reference-Format}
\bibliography{sample-base}


\end{document}